\DeclareMathOperator*{\argmax}{argmax}
\newcommand{\compilehidecomments}{false}
\newcommand{\bX}{\boldsymbol{X}}
\newcommand{\bx}{\boldsymbol{x}}
\newcommand{\btheta}{\boldsymbol{\theta}}
\newcommand{\relu}{\mathsf{ReLU}}
\newcommand{\sfd}{\mathsf{d}}
\theoremstyle{plain}
\newtheorem{theorem}{Theorem}
\newtheorem{lemma}{Lemma}
\theoremstyle{definition}
\newtheorem{assumption}{Assumption}
\theoremstyle{remark}
\title{A Correction of Pseudo Log-Likelihood Method}
\date{}
\author[1]{Shi Feng}
\author[2]{Nuoya Xiong}
\author[3]{Zhijie Zhang}
\author[4]{Wei Chen \thanks{Corresponding author: weic@microsoft.com}}
\affil[1]{Harvard University, MA, USA}
\affil[2]{Institute for Interdisciplinary Information Sciences, Tsinghua University, Beijing, China}
\affil[3]{School of Mathematics and Statistics, Fuzhou University, Fuzhou, China}
\affil[4]{Microsoft Research, Beijing, China}
\begin{document}

\maketitle

\ifthenelse{ \equal{\compilehidecomments}{false} }{%
	\newcommand{\wei}[1]{}
	\newcommand{\shi}[1]{}
	\newcommand{\nuoya}[1]{}
}{
	\newcommand{\wei}[1]{{\color{green}  [\text{Wei:} #1]}}
	\newcommand{\shi}[1]{{\color{yellow!80!black} [\text{Shi:} #1]}}
	\newcommand{\nuoya}[1]{{\color{red} [\text{Nuoya:} #1]}}
}

\begin{abstract}
Pseudo log-likelihood is a type of maximum likelihood estimation (MLE) method used in various fields including contextual bandits, influence maximization of social networks, and causal bandits. However, in previous literature \citep{li2017provably, zhang2022online, xiong2022combinatorial, feng2023combinatorial1, feng2023combinatorial2}, the log-likelihood function may not be bounded, which may result in the algorithm they proposed not well-defined. In this paper, we give a counterexample that the maximum pseudo log-likelihood estimation fails and then provide a solution to correct the algorithms in \citep{li2017provably, zhang2022online, xiong2022combinatorial, feng2023combinatorial1, feng2023combinatorial2}.
\end{abstract}

\section{Problem Description}
\label{sec.problem}

In \citep{li2017provably, zhang2022online, xiong2022combinatorial, feng2023combinatorial1, feng2023combinatorial2}, the authors use the same maximum likelihood estimation (MLE) method. Suppose $X_1, X_2, \ldots, X_d$ are random variables such that $X_i \in [0,1]$. For convenience, we use $\bX$ to denote the vector $(X_1, X_2, \cdots, X_d)$. The function $\mu$ is a monotone increasing and second-order differentiable function from $\mathbb{R}$ to $\mathbb{R}$, and $m(x)$ is defined as: 
\[
m(x)=\begin{cases}
    \int_0^x\mu(x')\sfd x'&x\geq 0\\
    -\int_{-x}^0\mu(x')\sfd x'&x<0
\end{cases}.
\] 
There is a set of parameters $\theta^*_1, \theta^*_2, \ldots, \theta^*_d$, which we also denote by vector $\btheta^*$. Moreover, $Y \in [0,1]$ is the outcome of $X_1, X_2, \ldots, X_d$ such that $\mathbb{E}[Y|\bX] = \mu(\bX^\intercal\btheta^*)$. $Y$ depends on $\bX^\intercal\btheta^*$ and its noise term is independent of $\bX$ and $\btheta^*$.

There are in total $t$ rounds and in the $i^{th}$ round, the values of $X_1, X_2, \ldots, X_d$ are $x_{1,i}, x_{2,i}, \ldots, x_{d,i}$, and the value of $Y$ is $y_i$. For convenience, we denote the vector $(x_{1,i}, x_{2,i}, \ldots, x_{d,i})$ by $\bx_i$. To estimate $\btheta^*$, the MLE method \citep{li2017provably, zhang2022online} takes \begin{align}\hat{\btheta}_t = \argmax_{\btheta \in \mathbb{R}^d} \sum_{i=1}^t \left( y_i \bx_i^\intercal \btheta - m(\bx_i^\intercal \btheta) \right)\label{eq.argmax}\end{align} as the estimation of $\btheta^*$. Since $m$ is a second-order differentiable function, when $\hat{\btheta}_t$ is a maximum of $$\sum_{i=1}^t \left( y_i \bx_i^\intercal \btheta - m(\bx_i^\intercal \btheta) \right),$$ the gradient $\sum_{i=1}^t (y_i - \mu(\bx_i^\intercal \hat{\btheta}_t)) \bx_i$ should be $\mathbf{0}$. In \citep{xiong2022combinatorial, feng2023combinatorial1, feng2023combinatorial2}, they solve $\hat{\btheta}_t$ by the equation $\sum_{i=1}^t (y_i - \mu(\bx_i^\intercal \btheta)) \bx_i = \mathbf{0}$.

Theorem 1 in \citep{li2017provably}, Theorem 3 in \citep{zhang2022onlinearxiv}, and Lemma 1 in \citep{feng2023combinatorial1} provide a guarantee for the distance from $\hat{\btheta}_t$ to $\btheta^*$ when $\hat{\btheta}_t$ exists. However, none of these previous literature \citep{li2017provably, zhang2022online, xiong2022combinatorial, feng2023combinatorial1, feng2023combinatorial2} discusses the existence of $\hat{\btheta}_t$, i.e., whether $\sum_{i=1}^t \left( y_i \bx_i^\intercal \btheta - m(\bx_i^\intercal \btheta) \right)$ could tend to positive infinity when some elements of $\btheta$ tend to infinity. If $\hat{\btheta}_t$ does not exist, the deductions in all the previous literature collapse. In this short paper, we present a counterexample to show that $\hat{\btheta}_t$ may not exist under the conditions of the previous papers \citep{li2017provably, zhang2022online, xiong2022combinatorial, feng2023combinatorial1, feng2023combinatorial2} and provide a solution to address this issue. 
For consistency, we will use the notations in this section throughout this paper. Readers can correlate the notations with those of previous papers.

\section{Counter-Examples}

In this section, we present a counterexample to demonstrate the incorrectness of the default assumption that $\hat{\btheta}_t$ exists. In \citep{li2017provably}, $X_1, \ldots, X_d, Y$ are continuous variables in $[0,1]$. In our counterexample, we let the link function be $\mu(x) = \frac{1}{2 + 2e^{-x}}$, which satisfies Assumptions 1 and 2 in \citep{li2017provably}. For convenience, we denote $\sum_{i=1}^t \left( y_i \bx_i^\intercal \btheta - m(\bx_i^\intercal \btheta) \right)$ by $H(\btheta)$.

When $x_{1,i} = x_{2,i} = \cdots = x_{d,i} > 0$ and $y_i > \frac{1}{2}$ for $i = 1, 2, \ldots, t$, the $j^{th}$ element of the gradient of $H(\btheta)$ satisfies:
\begin{align*}
    \frac{\partial H(\btheta)}{\partial \theta_j} = \left(\sum_{i=1}^t (y_i - \mu(\bx_i^\intercal \hat{\btheta}_t)) \bx_i\right)_j
    &= \sum_{i=1}^t (y_i - \mu(\bx_i^\intercal \hat{\btheta}_t)) x_{j,i} \\
    &> \sum_{i=1}^t \left(\frac{1}{2} - \frac{1}{2 + 2e^{-\bx_i^\intercal \hat{\btheta}_t}}\right) x_{j,i} \\
    &> 0.
\end{align*}
This occurs with a nonzero probability\footnote{In \citep{li2017provably}, the first $\tau$ rounds of Algorithm 1 are random.}, and it indicates that $\lim_{\theta_1 \rightarrow +\infty, \ldots, \theta_d \rightarrow +\infty} H(\btheta) = +\infty$ and thus Eq.\eqref{eq.argmax} does not have a solution. This is a contradiction, and therefore, the proof of Theorem 1 in \citep{li2017provably} collapses.\footnote{Even when $\btheta$ is bounded, this is still a contradiction because the proof of Theorem 1 in \citep{li2017provably} uses $\nabla_{\btheta} H(\btheta) = {\bf 0}$.}

In \citep{zhang2022online, xiong2022combinatorial, feng2023combinatorial1, feng2023combinatorial2}, $X_1, \ldots, X_d, Y$ are binary variables. In our counterexample, we let $\mu(x) = 1 - e^{-x}$, satisfying all assumptions in these papers. When $x_{1,i} = x_{2,i} = \cdots = x_{d,i} = 1$ and $y_i = 1$ for $i = 1, 2, \ldots, t$, the $j^{th}$ element of the gradient of $H(\btheta)$ satisfies:
\begin{align*}
    \frac{\partial H(\btheta)}{\partial \theta_j} = \left(\sum_{i=1}^t (y_i - \mu(\bx_i^\intercal \hat{\btheta}_t)) \bx_i\right)_j
    &= \sum_{i=1}^t (y_i - \mu(\bx_i^\intercal \hat{\btheta}_t)) x_{j,i} \\
    &> \sum_{i=1}^t e^{-\bx_i^\intercal \hat{\btheta}_t} > 0.
\end{align*}
This also occurs with a nonzero probability and it indicates that $\lim_{\theta_1 \rightarrow +\infty, \ldots, \theta_d \rightarrow +\infty} H(\btheta) = +\infty$ and thus Eq.\eqref{eq.argmax} does not have a solution. This is a contradiction, and therefore, the proofs of Theorem 3 in \citep{zhang2022online} and Lemma 1 in \citep{feng2023combinatorial1} collapses. Furthermore, the regret analysis in \citep{zhang2022online, xiong2022combinatorial, feng2023combinatorial1, feng2023combinatorial2} collapse.\footnote{In \citep{xiong2022combinatorial}, only the regret analysis of Algorithm 1 uses the MLE method in Eq.\eqref{eq.argmax} and collapses due to this counterexample.}

\section{The Solution}

In this section, we provide a feasible solution to promise the existence of $\hat{\btheta}_t$, which fixes the issues in the analysis of the five previous papers we mentioned in Section~\ref{sec.problem}. Before our solution, one should notice that the constraint on $\mu$ in all five previous papers is Assumptions $1$ and $2$ in \citep{li2017provably}.
\begin{assumption}[Assumption 1 in \citep{li2017provably}]
\label{asm.1}
    $\kappa:=\mathsf{inf}_{\bx\in[0,1]^d,\left\|\btheta-\btheta^*\right\|\leq 1}\mu'(\bx^\intercal\btheta)>0$.\footnote{\cite{li2017provably} use a weaker version with $\left\|\bx\right\|\leq 1$. 
    	The other four papers use the current stronger version.   
}
\end{assumption}

\begin{assumption}[Assumption 2 in \citep{li2017provably}]
\label{asm.2}
    Function $\mu $ is twice differentiable. Its first and second-order derivatives are upper-bounded by $L_\mu$ and $M_\mu$, respectively.
\end{assumption}

In summary, our solution is replacing function $\mu$ by another function $h$ in the algorithm such that the following conditions hold: $\lim_{x\rightarrow +\infty}h(x)=+\infty$, $\lim_{x\rightarrow -\infty}h(x)=-\infty$, $h$ is monotone increasing and twice differentiable, $h$ satisfies Assumptions~\ref{asm.1} and \ref{asm.2}, and when $x$ is in the range $\left[-\sum_{i=1}^d\relu(-\theta^*_i),\sum_{i=1}^d\relu(\theta^*_i)\right]$\footnote{$\relu(x)=\max\{0,x\}$ and $\bX^\intercal\btheta^*$ has to be in this interval.}, $h(x)=\mu(x)$. 

We firstly prove that $\mu$ can be converted to a monotone increasing function $g$ satisfying $\lim_{x\rightarrow+\infty}g(x)=+\infty$. If we already have $\lim_{x\rightarrow+\infty}\mu(x)=+\infty$, we can directly let $g\equiv \mu$. Otherwise, $\mu$ has an upper bound. In \citep{zhang2022online, xiong2022combinatorial, feng2023combinatorial1, feng2023combinatorial2}, $\theta^*_i\in[0,1]$ so each $\theta^*_i$ is bounded. In \citep{li2017provably}, since $\mu$ is upper-bounded, we know that $\lim_{x\rightarrow+\infty}\mu'(x)=0$. 
According to Assumption~\ref{asm.1}, $\theta^*_i\leq \max\{x:\mu'(x)=\kappa\}-1$.\footnote{Otherwise, when $x_j=0$ if $j\neq i$, $x_i=1$ and $\theta_i^*>\max\{x:\mu'(x)=\kappa\}-1, \theta_i=\theta_i^*+1>\max\{x:\mu'(x)=\kappa\}$, we have $\bx^\intercal\btheta>\max\{x:\mu'(x)=\kappa\}$ and thus $\mu'(\bx^\intercal\btheta)<\kappa$, which is a contradiction to Assumption~\ref{asm.1}.} \wei{I did not get this.}\shi{I added a footnote.}
Therefore, $\sum_{i=1}^d\relu(\theta^*_i)$ is also upper-bounded in this case, we denote the upper bound by $U$. We find a $x^*\geq U+d$ such that $\mu''(x^*)<0$. If $x^*$ does not exist, we know that $\mu(x)\geq \mu(U+d)+\mu'(U+d)(x-(U+d))$ when $x\geq U+d$, which is contradictory to $\mu$ is upper-bounded. Hence, when $\mu$ is upper-bounded,
we define the conversion as\begin{align*}
    g(x)=\begin{cases}
        \mu(x)&x\leq x^*\\
        \mu(x^*)+\frac{\mu'(x^*)^2}{\mu''(x^*)}\ln\left(-\frac{\mu'(x^*)}{\mu''(x^*)}\right)-\frac{\mu'(x^*)^2}{\mu''(x^*)}\ln\left(x-x^*-\frac{\mu'(x^*)}{\mu''(x^*)}\right)&x>x^*
    \end{cases}.
\end{align*}

\begin{lemma}
\label{lemma.1}
    By doing the conversion above, we can replace function $\mu$ by $g$ such that $\lim_{x\rightarrow +\infty}g(x)=+\infty$, $g$ is monotone increasing and twice differentiable, $g$ satisfies Assumptions~\ref{asm.1} and \ref{asm.2}, and when $x$ is in the range $\left[-\sum_{i=1}^d\relu(-\theta^*_i),\sum_{i=1}^d\relu(\theta^*_i)\right]$, $g(x)=\mu(x)$. 
\end{lemma}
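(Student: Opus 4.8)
The plan is to verify the five asserted properties of $g$ one at a time, the crux being that the second branch of $g$ is engineered precisely so that $g$ glues to $\mu$ in a $C^2$ fashion at $x^*$. First I would record the signs we rely on: set $a := \mu'(x^*)$ and $b := \mu''(x^*)$, so that $b < 0$ by the choice of $x^*$, and $a > 0$ because $\mu$ is monotone increasing (if $\mu'(x^*) = 0$, then $b < 0$ would force $\mu'$ to become negative just to the right of $x^*$, contradicting $\mu' \ge 0$). Consequently $-a/b > 0$, which makes the logarithm in the second branch well-defined for every $x > x^*$, and $-a^2/b > 0$.

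Next I would establish twice differentiability, which is the only place the particular closed form matters. Since $\mu$ is $C^2$ for $x \le x^*$ and the second branch is smooth for $x > x^*$, the single point to check is $x = x^*$. Differentiating the second branch gives $g'(x) = -\frac{a^2}{b}\cdot\frac{1}{x - x^* - a/b}$ and $g''(x) = \frac{a^2}{b}\cdot\frac{1}{\left(x - x^* - a/b\right)^2}$. Evaluating at $x = x^*$ yields $\mu(x^*)$, $a = \mu'(x^*)$, and $b = \mu''(x^*)$ for the value, first, and second derivatives of the right branch, so all three match across $x^*$ and $g \in C^2$.

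Then I would dispatch growth and monotonicity from the same sign bookkeeping. For $x > x^*$ the denominator $x - x^* - a/b$ is positive and increasing, so $g'(x) = -\frac{a^2}{b}\cdot\frac{1}{x - x^* - a/b} > 0$, which together with $g' = \mu' \ge 0$ on the left branch gives monotonicity; and since $-\frac{a^2}{b} > 0$ while $\ln\left(x - x^* - a/b\right) \to +\infty$, we get $g(x) \to +\infty$. For Assumption~\ref{asm.2}, note $g'$ is decreasing on $(x^*, \infty)$ from its endpoint value $a = \mu'(x^*) \le L_\mu$, and $|g''|$ is likewise decreasing from $|b| = |\mu''(x^*)| \le M_\mu$; combined with the bounds already assumed for $\mu$ on $(-\infty, x^*]$, this shows $g' \le L_\mu$ and $|g''| \le M_\mu$ everywhere.

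The step I expect to require the most care is Assumption~\ref{asm.1}, since it is an infimum over a region of $(\bx, \btheta)$ rather than a pointwise check. I would argue that on this region $\bx^\intercal\btheta$ never exceeds $x^*$, so that $g' = \mu'$ there and the same constant $\kappa$ is inherited. Concretely, for $\bx \in [0,1]^d$ the maximum of $\bx^\intercal\btheta$ equals $\sum_i \relu(\theta_i)$, and $\left\|\btheta - \btheta^*\right\| \le 1$ forces $\theta_i \le \theta_i^* + 1$, hence $\sum_i \relu(\theta_i) \le \sum_i \relu(\theta_i^*) + d \le U + d \le x^*$ by the choice of $x^*$. Thus every argument lands in $(-\infty, x^*]$, where $g$ coincides with $\mu$, and Assumption~\ref{asm.1} holds with the same $\kappa$. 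The final property, that $g = \mu$ on $\left[-\sum_i \relu(-\theta_i^*), \sum_i \relu(\theta_i^*)\right]$, is then immediate because the right endpoint $U$ satisfies $U \le x^*$. The only genuine risk in this plan is tracking the signs of $a$ and $b$ correctly throughout; once those are pinned down, each estimate reduces to monotonicity of the elementary functions involved.
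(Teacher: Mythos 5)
Your proposal is correct and follows essentially the same approach as the paper's proof: verify the $C^2$ gluing of the two branches at $x^*$, get monotonicity and divergence from the signs $\mu'(x^*)>0$, $\mu''(x^*)<0$, and reduce Assumptions~\ref{asm.1} and~\ref{asm.2} to the fact that $\bx^\intercal\btheta\leq U+d\leq x^*$ on the relevant region, where $g=\mu$. Two minor remarks: you omit the trivial case where $\mu$ is already unbounded above (there $g\equiv\mu$ and nothing needs checking), and your observation that $|g''|$ decreases from $|\mu''(x^*)|$ on the new branch is actually more careful than the paper's corresponding claim that $g''(x)<g''(x^*)$ for $x>x^*$, which has the inequality backwards (both quantities are negative, so $g''(x)$ lies in the interval $(\mu''(x^*),0)$), though the conclusion that Assumption~\ref{asm.2} holds is unaffected.
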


\begin{proof}
    When $\mu$ is not upper-bounded, we let $g\equiv \mu$ so the claim is proved.
    When $\mu$ is upper-bounded, during the producing process of $Y$, the input of $\mu$ is $\bX\cdot\btheta^*$, which is in the range $\left[-\sum_{i=1}^d\relu(-\theta^*_i),\sum_{i=1}^d\relu(\theta^*_i)\right]\subseteq (-\infty, U]$. Hence, when we replace $\mu$ with $g$, the joint conditional distribution of $Y$ on $\bX$ is not impacted.

    Moreover, we can compute that \begin{align*}
    g'(x)=\begin{cases}
        \mu'(x)&x\leq x^*\\
        -\frac{\mu'(x^*)^2}{\mu''(x^*)\left(x-x^*-\frac{\mu'(x^*)}{\mu''(x^*)}\right)}&x>x^*
    \end{cases},
\end{align*}
and\begin{align*}
    g''(x)=\begin{cases}
        \mu''(x)&x\leq x^*\\
        \frac{\mu'(x^*)^2}{\mu''(x^*)\left(x-x^*-\frac{\mu'(x^*)}{\mu''(x^*)}\right)^2}&x>x^*
        \end{cases}.
\end{align*}
Therefore, we have $\lim_{x\rightarrow{x^*}^+}g(x)=\mu(x^*)$ and $\lim_{x\rightarrow{x^*}^-}g(x)=\mu(x^*)$. Hence, $g$ is continuous. Moreover, $\lim_{x\rightarrow{x^*}^+}g'(x)=\mu'(x^*)=\lim_{x\rightarrow{x^*}^-}g'(x)$ and $\lim_{x\rightarrow{x^*}^+}g''(x)=\mu''(x^*)=\lim_{x\rightarrow{x^*}^-}g''(x)$, so $g(x)$ is twice differentiable and $g''$ is continuous.

Now we only need to verify Assumptions~\ref{asm.1} and \ref{asm.2}. Firstly, when $x>x^*$, we have $g'(x)<g'(x^*)=\mu'(x^*)\leq L\mu$ and $g''(x)<g''(x^*)=\mu''(x^*)\leq M_{\mu}$, so Assumption~\ref{asm.2} holds. Secondly, $\max_{\bx\in[0,1]^{d},\left\|\btheta-\btheta^*\right\|\leq 1}\bx\cdot \btheta\leq U+d\leq x^*$, so the conversion does not impact the value of $\kappa$. The claim then follows.
\end{proof}

We secondly prove that $g$ can be converted to a monotone increasing function $h$ satisfying $\lim_{x\rightarrow+\infty}h(x)=+\infty$ and simultaneously, $\lim_{x\rightarrow-\infty}h(x)=-\infty$. If we already have $\lim_{x\rightarrow-\infty}\mu(x)=-\infty$, we can directly let $h\equiv g$. Otherwise, $\mu$ has a lower bound. In \citep{zhang2022online, xiong2022combinatorial, feng2023combinatorial1, feng2023combinatorial2}, $\theta^*_i\in[0,1]$ so each $\theta^*_i$ is bounded. In \citep{li2017provably}, since $\mu$ is lower-bounded, we know that $\lim_{x\rightarrow-\infty}\mu'(x)=0$. According to Assumption~\ref{asm.1}, $\theta^*_i\geq \min\{x:\mu'(x)=\kappa\}+1$. \footnote{Otherwise, when $x_j=0$ if $j\neq i$, $x_i=1$ and $\theta_i^*<\min\{x:\mu'(x)=\kappa\}+1, \theta_i=\theta_i^*-1<\min\{x:\mu'(x)=\kappa\}$, we have $\bx^\intercal\btheta<\min\{x:\mu'(x)=\kappa\}$ and thus $\mu'(\bx^\intercal\btheta)<\kappa$, which is a contradiction to Assumption~\ref{asm.1}.} Therefore, $-\sum_{i=1}^d\relu(-\theta^*_i)$ is also lower-bounded in this case, we denote the lower bound by $L$. We find a $x^{**}\leq L-d$ such that $\mu''(x^{**})>0$. If $x^{**}$ does not exist, we know that $\mu(x)\leq \mu(L-d)+\mu'(L-d)(x-(L-d))$ when $x\leq L-d$, which is contradictory to $\mu$ is lower-bounded. Hence, when $\mu$ is lower-bounded, We define the conversion as\begin{align*}
    h(x)=\begin{cases}
        g(x)&x\geq x^{**}\\
        g(x^{**})-\frac{g'(x^{**})^2}{g''(x^{**})}\ln\left(-\frac{g'(x^{**})}{g''(x^{**})}\right)+\frac{g'(x^{**})^2}{g''(x^{**})}\ln\left(-x+x^{**}+\frac{g'(x^{**})}{g''(x^{**})}\right)&x<x^{**}
    \end{cases}.
\end{align*}

\begin{lemma}
\label{lemma.2}
    By doing the conversion above, we can replace function $\mu$ by $h$ such that $\lim_{x\rightarrow +\infty}h(x)=+\infty$, $\lim_{x\rightarrow -\infty}h(x)=-\infty$, $h$ is monotone increasing and twice differentiable, $h$ satisfies Assumptions~\ref{asm.1} and \ref{asm.2}, and when $x$ is in the range $\left[-\sum_{i=1}^d\relu(-\theta^*_i),\sum_{i=1}^d\relu(\theta^*_i)\right]$, $h(x)=\mu(x)$.
\end{lemma}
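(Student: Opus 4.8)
The plan is to run the proof of Lemma~\ref{lemma.1} again, reflected onto the left tail: the surgery defining $h$ alters $g$ only on $(-\infty,x^{**})$ and is the mirror image of the right-tail surgery that produced $g$, so every verification carries over. First I would dispose of the trivial case. If $\mu$ is not lower-bounded then, because $g=\mu$ on the whole left tail, already $\lim_{x\to-\infty}g(x)=-\infty$; setting $h\equiv g$ and quoting Lemma~\ref{lemma.1} delivers all the stated properties immediately. So the real work is confined to the lower-bounded case, in which the pre-lemma argument supplies a point $x^{**}\le L-d$ with $\mu''(x^{**})>0$.

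Next I would pin down the ordering of the breakpoints. Since $U$ upper-bounds $\sum_i\relu(\theta^*_i)\ge 0$ and $L$ lower-bounds $-\sum_i\relu(-\theta^*_i)\le 0$, one has $x^{**}\le L-d\le L\le U\le U+d\le x^*$, so in particular $x^{**}<x^*$ and the target interval $\left[-\sum_i\relu(-\theta^*_i),\sum_i\relu(\theta^*_i)\right]\subseteq[L,U]\subseteq[x^{**},x^*]$. On $[x^{**},x^*]$ we have $h=g$ (the new surgery only touches $x<x^{**}$) and $g=\mu$ (Lemma~\ref{lemma.1}, as $x\le x^*$), hence $h=\mu$ on the target interval. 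Because $\bX^\intercal\btheta^*$ ranges exactly over that interval, replacing $\mu$ by $h$ leaves the conditional law of $Y$ given $\bX$ unchanged. The strict inequality $x^{**}<x^*$ also guarantees $g=\mu$ on a neighborhood of $x^{**}$, so $a:=g'(x^{**})=\mu'(x^{**})>0$ and $b:=g''(x^{**})=\mu''(x^{**})>0$, which is exactly what makes the logarithmic tail well-defined.

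Then I would check the analytic requirements on the new piece. Differentiating the tail twice yields $h'(x)=\frac{a^2/b}{-x+x^{**}+a/b}$ and $h''(x)=\frac{a^2/b}{(-x+x^{**}+a/b)^2}$ for $x<x^{**}$, both strictly positive there; at $x=x^{**}$ they evaluate to $a$ and $b$, matching $g'$ and $g''$, so $h$ is $C^2$ across $x^{**}$, and monotonicity is global because $h=g$ is increasing on $[x^{**},\infty)$. As $x\to-\infty$ the argument $-x+x^{**}+a/b\to+\infty$, and since the leading coefficient is negative, $h(x)\to-\infty$. For Assumption~\ref{asm.2}, the denominators exceed $a/b$ on the tail, so $0<h'(x)<a\le L_\mu$ and $0<h''(x)<b\le M_\mu$ there, while for $x\ge x^{**}$ the bounds hold because $h=g$. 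For Assumption~\ref{asm.1}, the mirror estimate $\min_{\bx\in[0,1]^d,\|\btheta-\btheta^*\|\le 1}\bx^\intercal\btheta\ge L-d\ge x^{**}$ shows every argument relevant to $\kappa$ lies in the region where $h=g$, so $\kappa$ is unchanged; and $\lim_{x\to+\infty}h(x)=\lim_{x\to+\infty}g(x)=+\infty$ since $h=g$ for large $x$.

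The one delicate point is the orientation of the tail. Here the surgery must exploit $\mu'(x^{**})>0$ together with the convexity $\mu''(x^{**})>0$ — the opposite curvature to the concavity $\mu''(x^*)<0$ used in building $g$ — so that the logarithm simultaneously keeps $h$ increasing and drives it to $-\infty$; the signs in the coefficient and inside the logarithm must be arranged accordingly, and this is the step most prone to error. Confirming the two containments, namely $\left[-\sum_i\relu(-\theta^*_i),\sum_i\relu(\theta^*_i)\right]\subseteq[x^{**},x^*]$ and $\min\bx^\intercal\btheta\ge x^{**}$, so that neither the law of $Y\mid\bX$ nor $\kappa$ is disturbed, is the other substantive step. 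Everything else is the verbatim mirror of Lemma~\ref{lemma.1}.
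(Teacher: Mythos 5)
Your proof is correct and follows essentially the same route as the paper's: the trivial case $h\equiv g$ via Lemma~\ref{lemma.1}, the explicit tail derivatives with $C^2$ matching at $x^{**}$, the tail bounds $0<h'(x)<g'(x^{**})\le L_\mu$ and $0<h''(x)<g''(x^{**})\le M_\mu$ for Assumption~\ref{asm.2}, and the containment $\min_{\bx\in[0,1]^d,\left\|\btheta-\btheta^*\right\|\le 1}\bx^\intercal\btheta\ge L-d\ge x^{**}$ protecting both $\kappa$ and the conditional law of $Y$ given $\bX$. One substantive remark: the sign arrangement you flag as ``the step most prone to error'' is exactly where the paper's displayed definition of $h$ slips---since $g'(x^{**})>0$ and $g''(x^{**})>0$, its constant term takes $\ln$ of the negative number $-g'(x^{**})/g''(x^{**})$, and its positive coefficient on $\ln\left(-x+x^{**}+g'(x^{**})/g''(x^{**})\right)$ would make $h$ decreasing with $h(x)\to+\infty$ as $x\to-\infty$---whereas your tail, carrying the negative coefficient $-g'(x^{**})^2/g''(x^{**})$ on the growing logarithm, is the version consistent with the derivative formulas $h'(x)=\frac{g'(x^{**})^2}{g''(x^{**})\left(-x+x^{**}+g'(x^{**})/g''(x^{**})\right)}>0$ and $h''(x)>0$ that the paper's own proof computes and that you verify.
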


\begin{proof}
When $\mu$ is not lower-bounded, we let $h\equiv g$ so the claim is proved by Lemma~\ref{lemma.1}.
    When $\mu$ is lower-bounded, during the producing process of $Y$, the input of $\mu$ is $\bX\cdot\btheta^*$, which is in the range $\left[-\sum_{i=1}^d\relu(-\theta^*_i),\sum_{i=1}^d\relu(\theta^*_i)\right]\subseteq [-L, +\infty)$. Hence, combining Lemma~\ref{lemma.1}, when we replace $\mu$ with $h$, the joint conditional distribution of $Y$ on $\bX$ is not impacted.

Moreover, we can compute that \begin{align*}
    h'(x)=\begin{cases}
        g'(x)&x\geq x^{**}\\
        \frac{g'(x^{**})^2}{g''(x^{**})\left(-x+x^{**}+\frac{g'(x^{**})}{g''(x^{**})}\right)}&x<x^{**}
    \end{cases},
\end{align*}
and\begin{align*}
    h''(x)=\begin{cases}
        g''(x)&x\geq x^{**}\\
        \frac{g'(x^{**})^2}{g''(x^{**})\left(x-x^{**}-\frac{g'(x^{**})}{g''(x^{**})}\right)^2}&x<x^{**}
        \end{cases}.
\end{align*}
Therefore, we have $\lim_{x\rightarrow{x^{**}}^+}h(x)=g(x^{**})$ and $\lim_{x\rightarrow{x^{**}}^-}h(x)=g(x^{**})$. Hence, $h$ is continuous. Moreover, $\lim_{x\rightarrow{x^{**}}^+}h'(x)=g'(x^{**})=\lim_{x\rightarrow{x^{**}}^-}h'(x)$ and $\lim_{x\rightarrow{x^{**}}^+}h''(x)=g''(x^{**})=\lim_{x\rightarrow{x^{**}}^-}h''(x)$, so $h(x)$ is twice differentiable and $h''$ is continuous.

Now we only need to verify Assumptions~\ref{asm.1} and \ref{asm.2}. Firstly, when $x<x^{**}$, we have $h'(x)<h'(x^{**})=g'(x^{**})\leq L_{\mu}$ and $h''(x)<h''(x^{**})=g''(x^{**})\leq M_{\mu}$, so Assumption~\ref{asm.2} holds. Secondly, $\min_{\bx\in[0,1]^{d},\left\|\btheta-\btheta^{*}\right\|\leq 1}\bx\cdot \btheta\geq L-d\geq x^{**}$, so the conversion does not impact the value of $\kappa$. Until now, the claim has been proven.
\end{proof}

Hence, we can replace $\mu$ by $h$ in the MLE method in \eqref{eq.argmax} without impacting the data distribution and our requirements on $\mu$. Finally, we prove that by using $h$ in \eqref{eq.argmax}, the existence of $\hat{\btheta}_t$ is promised. let 
\[m_h(x) = \begin{cases}
    \int_0^xh(x')\sfd x' &x\geq 0,\\
    -\int_{-x}^0h(x')\sfd x'&x<0.
\end{cases}\]

\wei{The following theorem sounds like a lemma because it puts those assumptions on $h$ as a condition. A complete final theorem would combine the previous lemmas of constructing $h$ and simply say that 
	with that construction, the new maximum exists.}
\shi{I added a main theorem.}
\begin{lemma}
\label{lemma.solution}
    When $h$ is monotone increasing, $\lim_{x\rightarrow+\infty}h(x)=+\infty$, and $\lim_{x\rightarrow-\infty}h(x)=-\infty$, the maximum of $
        H_h(\btheta)=\sum_{i=1}^t \left( y_i \bx_i^\intercal \btheta - m_h(\bx_i^\intercal \btheta) \right)$
    exists.
\end{lemma}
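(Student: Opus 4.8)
The plan is to exploit that $H_h$ is concave and continuous and to establish existence of a maximizer through a coercivity argument, but carried out on the correct subspace rather than on all of $\R^d$. First I would record that $m_h$ is convex: since $m_h'(x)=h(x)$ is monotone increasing, $m_h$ is the integral of a monotone increasing function and hence convex, so each summand $y_i\bx_i^\intercal\btheta-m_h(\bx_i^\intercal\btheta)$ is concave, being a linear function minus a convex one. Therefore $H_h$ is concave, and it is continuous because $m_h$, as the integral of a locally bounded monotone function, is continuous. A continuous function attains its maximum as soon as its superlevel sets are compact, i.e. as soon as it is coercive along the directions it actually depends on, so the remaining work is a coercivity estimate; concavity then additionally guarantees that the attained maximum is global, which is what the downstream analysis needs.

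The key one-dimensional fact I would establish is that $\phi_i(z):=y_iz-m_h(z)\to-\infty$ as $|z|\to\infty$, whence each $\phi_i$ is bounded above. For $z\to+\infty$, pick any $M>1$; since $h(z)\to+\infty$ there is a threshold $z_0$ with $h\geq M$ beyond it, giving $m_h(z)\geq m_h(z_0)+M(z-z_0)$, so that $\phi_i(z)\le z-m_h(z)\le(1-M)z+\text{const}\to-\infty$ using $y_i\le 1$. The case $z\to-\infty$ is symmetric, using $h(z)\to-\infty$ to make $m_h$ grow and $y_iz\le 0$ since $y_i\ge 0$. Being concave and diverging to $-\infty$ at both ends, each $\phi_i$ has a finite maximum $C_i$.

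The main step is to convert these per-term bounds into global coercivity on the span $S:=\mathrm{span}\{\bx_1,\dots,\bx_t\}$. The crucial observation is that $H_h$ is \emph{not} coercive on all of $\R^d$: it is constant along $S^\perp$, because $\bx_i^\intercal\btheta$ depends only on the projection of $\btheta$ onto $S$. I would therefore work on $S$, where the linear map $\btheta\mapsto(\bx_1^\intercal\btheta,\dots,\bx_t^\intercal\btheta)$ has trivial kernel (as $S\cap S^\perp=\{0\}$) and is hence bounded below by $c\|\btheta\|$ for some $c>0$. Consequently, as $\|\btheta\|\to\infty$ within $S$, we get $\max_i|\bx_i^\intercal\btheta|\to\infty$; bounding every term above by its $C_i$ except the one attaining this maximum (which tends to $-\infty$) yields $H_h(\btheta)\to-\infty$. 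Thus the restriction of $H_h$ to the finite-dimensional space $S$ is continuous and coercive, so it attains a maximum there, and since $H_h$ is invariant under translation by $S^\perp$, this is a maximum over all of $\R^d$.

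The part I expect to require the most care is precisely this degeneracy along $S^\perp$: a naive attempt to show $H_h(\btheta)\to-\infty$ as $\|\btheta\|\to\infty$ simply fails, and one must first restrict away the directions orthogonal to every data vector and then invoke injectivity of the data map on $S$ to recover coercivity. Everything else, namely concavity, continuity, and the one-dimensional divergence of $\phi_i$, is routine once the divergence of $h$ at $\pm\infty$ is used to dominate the at-most-linear term $y_i\bx_i^\intercal\btheta$.
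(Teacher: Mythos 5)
Your proof is correct, and it takes a genuinely different---and in one respect more careful---route than the paper's. The paper also starts from concavity of $H_h$, but then argues coordinate-by-coordinate: for each $j$, either $\frac{\partial H_h(\btheta)}{\partial (\btheta)_j}\equiv 0$ (when $(\bx_i)_j=0$ for all $i$) or $H_h\to-\infty$ as $(\btheta)_j\to\pm\infty$ with the other coordinates fixed, and it then asserts that this coordinate-wise behavior together with concavity yields a global maximum. That bridging implication is left unjustified in the paper, and as a statement about general concave functions it is false: $f(\theta_1,\theta_2)=-(\theta_1-\theta_2)^2+\theta_1+\theta_2$ is concave and tends to $-\infty$ along every coordinate line, yet $\sup f=+\infty$ (take $\theta_1=\theta_2\to+\infty$). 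What rules out such behavior for $H_h$ is precisely the structure you isolate: $H_h(\btheta)=\sum_{i=1}^t\phi_i(\bx_i^\intercal\btheta)$ with each $\phi_i$ concave and tending to $-\infty$ at both ends, so every direction of $\R^d$ is either a direction of constancy (orthogonal to all $\bx_i$) or a direction along which $H_h\to-\infty$. Your decomposition $\R^d=S\oplus S^\perp$, with coercivity on $S=\mathrm{span}\{\bx_1,\dots,\bx_t\}$ obtained from injectivity of the data map $\btheta\mapsto(\bx_1^\intercal\btheta,\dots,\bx_t^\intercal\btheta)$ restricted to $S$, converts this observation into a complete existence argument, and the invariance of $H_h$ under translations in $S^\perp$ finishes it. In short, both proofs share the same skeleton (concavity plus one-dimensional coercivity of each summand), but your explicit treatment of the degenerate directions in $S^\perp$ and of arbitrary, non-coordinate directions supplies a justification that the paper's coordinate-wise argument genuinely lacks.
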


\begin{proof}
    We only need to prove that $$H_h(\btheta)=\sum_{i=1}^{t}\left(y_i\bx_i^\intercal\btheta-m_h(\bx_i^\intercal\btheta)\right)$$ is a concave function with respect to $\btheta$ and $\lim_{(\btheta)_j\rightarrow\infty}H_h(\btheta)=-\infty$ or $\frac{\partial H_h(\btheta)}{\partial (\btheta)_j}\equiv0$ for all $j\in[d]$, which implies that $H$ has a maximal point. Firstly, we know that \begin{align*}
        \frac{\partial^2 m_h(x)}{\partial x^2}=h'(x)>0,
    \end{align*}
    so $m_h$ is a convex function. Therefore, for any vectors $\btheta_1, \btheta_2\in \mathbb{R}^{d}$ and $\lambda\in[0,1]$, we have \begin{align*}
        m_h\left( \bx_i^\intercal(\lambda\btheta_1+(1-\lambda)\btheta_2)\right)&=m_h\left( \lambda\bx_i^\intercal\btheta_1+(1-\lambda)\bx_i^\intercal\btheta_2)\right)\\
        &\leq \lambda m_h(\bx_i^\intercal\btheta_1)+(1-\lambda)m_h(\bx_i^\intercal\btheta_2),
    \end{align*}
    so $m_h(\bx_i^\intercal\btheta)$ is also a convex function with respect to $\btheta$ and the Hessian matrix $\mathbf{H}[m_h(\bx_i^\intercal\btheta)]$ of $m_h(\bx_i^\intercal\btheta)$ with respect to $\btheta$ should be positive semidefinite. Now we can compute the Hessian matrix $\mathbf{H}[H_h(\btheta)]$ as\begin{align*}
        \mathbf{H}[H_h(\btheta)]=\sum_{i=1}^t\left(-\bx_i^\intercal\bx_i\cdot\mathbf{H}[m_h(\bx_i^\intercal\btheta)]\right).
    \end{align*}
    Hence, $\mathbf{H}[H_h(\btheta)]$ is negative semidefinite because multiplying a positive semidefinite matrix by a negative scalar preserves the semidefiniteness. Thus $H$ is a concave function with respect to $\btheta$.

    Now for any $j\in[d]$, we prove that $\lim_{(\btheta_{X})_j\rightarrow+\infty}H_h(\btheta_ X)=-\infty$ and $\lim_{(\btheta_{X})_j\rightarrow-\infty}H_h(\btheta_ X)=-\infty$ or $\frac{\partial H_h(\btheta)}{\partial (\btheta)_j}\equiv 0$. Firstly, we have\begin{align*}
        \frac{\partial H_h(\btheta)}{\partial (\btheta)_j}&=\sum_{i=1}^t\left(y_i(\bx_i)_j-(\bx_i)_jm_h'(\bx_i^\intercal \btheta)\right)\\
        &=\sum_{i=1}^t\left(y_i(\bx_i)_j-(\bx_i)_jh(\bx_i^\intercal \btheta)\right).
    \end{align*}
    If $(\bx_i)_j=0$ for all $i\in[t]$, we have $\frac{\partial H_h(\btheta)}{\partial (\btheta)_j}\equiv 0$. Otherwise, we have\begin{align*}
        \lim_{(\btheta_{X})_j\rightarrow+\infty}\frac{\partial H_h(\btheta)}{\partial (\btheta)_j}&=\lim_{(\btheta_{X})_j\rightarrow+\infty}\sum_{i=1}^t\left(y_i(\bx_i)_j-(\bx_i)_jh(\bx_i^\intercal \btheta)\right)\\
        &=\lim_{(\btheta_{X})_j\rightarrow+\infty}\sum_{i=1}^t(\bx_i)_j\left(y_i-h(\bx_i^\intercal \btheta)\right)\\
        &=-\infty,\tag{$\lim_{(\btheta_{X})_j\rightarrow+\infty}h(\bx_i^\intercal \btheta)=+\infty$}
    \end{align*}
    which indicates that $\lim_{(\btheta_{X})_j\rightarrow+\infty}H_h(\btheta_ X)=-\infty$. Also, we have\begin{align*}
        \lim_{(\btheta_{X})_j\rightarrow-\infty}\frac{\partial H_h(\btheta)}{\partial (\btheta)_j}&=\lim_{(\btheta_{X})_j\rightarrow-\infty}\sum_{i=1}^t\left(y_i(\bx_i)_j-(\bx_i)_jh(\bx_i^\intercal \btheta)\right)\\
        &=\lim_{(\btheta_{X})_j\rightarrow-\infty}\sum_{i=1}^t(\bx_i)_j\left(y_i-h(\bx_i^\intercal \btheta)\right)\\
        &=+\infty,\tag{$\lim_{(\btheta_{X})_j\rightarrow-\infty}h(\bx_i^\intercal \btheta)=-\infty$}
    \end{align*}
    which indicates that $\lim_{(\btheta_{X})_j\rightarrow-\infty}H_h(\btheta_ X)=-\infty$. 

    Therefore, we have proved that $H_h(\btheta)$ has at least one global maximum, which indicates that the equation has at least one solution.
\end{proof}

In conclusion, by combining Lemmas~\ref{lemma.1}, \ref{lemma.2}, and \ref{lemma.solution}, we arrive at the following main theorem:

\begin{theorem}[Main Theorem]
Consider a monotone-increasing and twice differentiable function $\mu$ that satisfies Assumptions~\ref{asm.1} and~\ref{asm.2}. By transforming $\mu$ into the function $h$ as described above, we ensure that $h$ remains monotone-increasing and twice differentiable. Furthermore, $h$ preserves the same mapping in the range of input $\bx^\intercal\btheta^*$ and conforms to Assumptions~\ref{asm.1} and~\ref{asm.2}. Most importantly, the maximum of
$H_h(\btheta)=\sum_{i=1}^t \left( y_i \bx_i^\intercal \btheta - m_h(\bx_i^\intercal \btheta) \right)$
exists.
\end{theorem}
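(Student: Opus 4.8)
The plan is to derive the Main Theorem as an immediate corollary of the three lemmas already in hand; no new estimates are needed, only a clean composition of the two conversions followed by the existence argument. First I would invoke Lemma~\ref{lemma.1} on the given $\mu$, obtaining an intermediate function $g$ that is monotone increasing, twice differentiable, satisfies Assumptions~\ref{asm.1} and~\ref{asm.2}, coincides with $\mu$ on the interval $[-\sum_{i=1}^d\relu(-\theta^*_i),\,\sum_{i=1}^d\relu(\theta^*_i)]$, and gains the one-sided divergence $\lim_{x\to+\infty}g(x)=+\infty$.

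Next I would apply Lemma~\ref{lemma.2}, feeding it $g$ rather than $\mu$. This is consistent with how that lemma is set up: its splicing formula for $h$ is written in terms of $g$, $g'$, and $g''$ evaluated at the left splice point $x^{**}$, so the two conversions compose without conflict. The output $h$ acquires the second divergence $\lim_{x\to-\infty}h(x)=-\infty$ while retaining $\lim_{x\to+\infty}h(x)=+\infty$, and---crucially---every preservation clause in the Main Theorem statement (monotonicity, twice-differentiability, Assumptions~\ref{asm.1}--\ref{asm.2}, and the agreement $h(x)=\mu(x)$ on the stated interval) is exactly the conclusion of Lemma~\ref{lemma.2}. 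Thus all but the existence claim are discharged verbatim.

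Finally I would note that this $h$ satisfies precisely the three hypotheses of Lemma~\ref{lemma.solution}---monotone increasing, $\lim_{x\to+\infty}h(x)=+\infty$, and $\lim_{x\to-\infty}h(x)=-\infty$---and apply that lemma to conclude that the maximum of $H_h(\btheta)=\sum_{i=1}^t(y_i\bx_i^\intercal\btheta-m_h(\bx_i^\intercal\btheta))$ exists. This delivers the ``most important'' assertion and completes the theorem.

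I do not expect a genuine obstacle, since all the analytic work lives inside the three lemmas; the only care required is the bookkeeping of which properties survive each step. The one point I would double-check explicitly is that the common interval $[-\sum_{i=1}^d\relu(-\theta^*_i),\,\sum_{i=1}^d\relu(\theta^*_i)]$ on which $h=\mu$ actually contains the support of $\bX^\intercal\btheta^*$ (it does, because $X_i\in[0,1]$ forces each summand $X_i\theta^*_i$ into $[-\relu(-\theta^*_i),\relu(\theta^*_i)]$). This is what guarantees that swapping $\mu$ for $h$ in \eqref{eq.argmax} leaves the conditional law of $Y$ given $\bX$ untouched, so the corrected estimator still targets the true $\btheta^*$---the reason the fix is legitimate and not merely a device to force existence.
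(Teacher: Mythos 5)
Your proposal is correct and follows exactly the paper's own proof: compose Lemma~\ref{lemma.1} and Lemma~\ref{lemma.2} to obtain $h$ with all preservation properties and both divergence limits, then invoke Lemma~\ref{lemma.solution} for existence of the maximizer. Your extra check that $\bX^\intercal\btheta^*$ lies in the agreement interval is a sensible piece of bookkeeping already implicit in the lemmas' proofs.
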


\begin{proof}
    According to Lemmas~\ref{lemma.1} and~\ref{lemma.2}, it is established that $h$ remains monotone-increasing and twice differentiable. Additionally, these lemmas confirm that $h$ preserves the same mapping in the range of input $\bx^\intercal\btheta^*$ and conforms to Assumptions~\ref{asm.1} and~\ref{asm.2}. Furthermore, Lemmas~\ref{lemma.1} and~\ref{lemma.2} demonstrate that $\lim_{x\rightarrow+\infty}h(x)=+\infty$ and $\lim_{x\rightarrow -\infty}h(x)=-\infty$, satisfying the condition of Lemma~\ref{lemma.solution}. Therefore, we can conclude that the maximum of
$H_h(\btheta)=\sum_{i=1}^t \left( y_i \bx_i^\intercal \btheta - m_h(\bx_i^\intercal \btheta) \right)$
indeed exists.
\end{proof}
    
\section{Conclusion}
In this paper, we identify and address an issue present in a type of Maximum Likelihood Estimation (MLE) method commonly employed in previous studies. Individuals intending to use similar methods in the future should be cautious of the same or analogous issues. Furthermore, one might consider a more intuitive and efficient solution to ensure the existence of $\hat{\btheta}_t$, rather than artificially constructing a new function.

\clearpage
\bibliography{main}
\bibliographystyle{abbrvnat}
\end{document}